
\documentclass[letterpaper, 10 pt, conference]{ieeeconf}  

\IEEEoverridecommandlockouts         

\overrideIEEEmargins

\usepackage[doi=false,isbn=false,defernumbers=true,style=ieee,backend=bibtex,maxbibnames=1000]{biblatex} 
\bibliography{library_GXZ.bib}

\usepackage{balance} 


\usepackage{xcolor,calc}


\usepackage[pdftex]{graphicx}
\usepackage{amsmath} 
\usepackage{amssymb}  
\usepackage{color}
\usepackage{enumerate}
\usepackage{gensymb}
\usepackage{multirow}
\usepackage{mathtools}
\usepackage{booktabs}
\usepackage{epstopdf}
\usepackage{algorithm,algcompatible}

\usepackage[normalem]{ulem}

\usepackage{url}
\usepackage{bm}

\usepackage{booktabs}
\newcommand{\ra}[1]{\renewcommand{\arraystretch}{#1}}

\newtheorem{theorem}{Theorem}

\newtheorem{proposition}{Proposition}

\makeatletter
\newsavebox\myboxA
\newsavebox\myboxB
\newlength\mylenA

\newcommand*\xoverline[2][0.75]{%
    \sbox{\myboxA}{$\m@th#2$}%
    \setbox\myboxB\null
    \ht\myboxB=\ht\myboxA%
    \dp\myboxB=\dp\myboxA%
    \wd\myboxB=#1\wd\myboxA
    \sbox\myboxB{$\m@th\overline{\copy\myboxB}$}
    \setlength\mylenA{\the\wd\myboxA}
    \addtolength\mylenA{-\the\wd\myboxB}%
    \ifdim\wd\myboxB<\wd\myboxA%
       \rlap{\hskip 0.5\mylenA\usebox\myboxB}{\usebox\myboxA}%
    \else
        \hskip -0.5\mylenA\rlap{\usebox\myboxA}{\hskip 0.5\mylenA\usebox\myboxB}%
    \fi}
\makeatother

\newcommand{\R}{\mathbb{R}}

\newcommand*{\blue}[1]{\textcolor{black}{#1}}



\usepackage{tabularx}
\usepackage{booktabs}

\makeatletter
\let\NAT@parse\undefined
\makeatother
\usepackage{hyperref}

\title{\LARGE \bf Safe and Near-Optimal Policy Learning for Model \\ Predictive Control  using Primal-Dual Neural Networks}

\author{Xiaojing Zhang, Monimoy Bujarbaruah, and Francesco Borrelli  
\thanks{The authors are with the MPC Lab, UC Berkeley, USA; E-mails: \tt\scriptsize{\{xiaojing.zhang, monimoyb, fborrelli\}@berkeley.edu.}} %
}

\begin{document}

\maketitle
  \thispagestyle{empty}
\pagestyle{empty}

\begin{abstract}
In this paper, we propose a novel framework for approximating the explicit MPC law for linear parameter-varying systems using supervised learning. In contrast to most existing approaches, we not only learn the control policy, but also a ``certificate policy", that allows us to estimate the sub-optimality of the learned control policy online, during execution-time. We learn both these policies from data using supervised learning techniques, and also provide a randomized method that allows us to guarantee the quality of each learned policy, measured in terms of feasibility and optimality. This in turn allows us to bound the probability of the learned control policy of being infeasible or suboptimal, where the check is performed by the certificate policy.
Since our algorithm does not require the solution of an optimization problem during run-time, it can be deployed even on resource-constrained systems. We illustrate the efficacy of the proposed framework on a vehicle dynamics control problem where we demonstrate a speedup of up to two orders of magnitude compared to online optimization with minimal performance degradation.
\end{abstract}

\section{Introduction}
Model Predictive Control (MPC) is an advanced control strategy that is able to optimize a plant's behavior while respecting system constraints. Originating in process control, MPC has found application in fields such as building climate control \cite{Oldewurtel:energyBuildings:2012, zhang:schildbach:sturzenegger:morari:13, MaMatuskoBorrelli2014}, quadcopter control \cite{BouffardAswani2012}, self-driving vehicles \cite{Liniger_OCAM2015, CarrauLinigerZhangLygerosECC2016, rosolia2017autonomousrace, bujarbaruahtorque, ZhangLinigerBorrelli_CollAvoid_2017, 2018arXiv180604335B} and robotics in general \cite{LevineAbbeel2016learning}. However, implementing MPC  on fast dynamical systems with limited computation capacity is generally challenging since MPC requires the solution of an optimization problem at each sampling step. This is especially true for mass-produced systems such as drones and automotive vehicles. 

Over the past decades, significant research effort has been devoted to enabling MPC to systems with limited computation power by developing numerically efficient solvers that exploit the structure of the MPC optimization problem \cite{MattingleyBoydCVXGen2012, osqp}. Another approach to reduce computation load of MPC is to pre-compute the optimal control law offline, store it into the system, and evaluate it during run-time. This approach, known as \emph{Explicit MPC}, is generally well-understood for linear time-invariant system where the optimal control law has been shown to be piecewise affine over polyhedral regions \cite{BorrelliBemporadMorari_book}. The main drawbacks of Explicit MPC is that synthesis of the optimal control law can be computationally demanding even for medium-sized problems, and storing and evaluating the look up tables can be prohibitive for embedded platforms \cite{KvasnicaFikarTAC2012}. To address this issue, significant effort has been devoted to computing \emph{suboptimal} explicit MPC polices that are defined over fewer polyhedral regions and hence can be evaluated more efficiently \cite{JohansenGrancharovaTAC2003, multiresSeanSummers, JonesMorariPolytopicApproximationTAC2010, KvasnicaFikarClippingTAC2012, KvasnicaAutomatica2013}.
Another way of approximating explicit MPC control laws is by means of \emph{function approximation}  such as supervised learning \cite{ParisiniNNAutomatica1995} or reinforcement learning \cite{ChenMorariACC2018}. In supervised learning, for example, the goal is to find a function, out of a given function class, that best explains some given training data. One main advantage of using such function approximation is that, while training can be computationally demanding, evaluating the control law can often be carried out very efficiently \cite{BemporadUltraFastTAC2011,DomahidiLearning2011}. Although techniques such as supervised learning can in principle apply to nonlinear systems, most research  has focused on linear time-invariant systems \cite{BemporadUltraFastTAC2011, DomahidiLearning2011, ChenMorariACC2018, LuciaEfficientRepresentationArXiv2018}, since guaranteeing safety and performance of the approximated control law is generally hard for nonlinear systems \cite{HertneckAllgowerArXiv2018}.

In this paper, we propose a novel policy approximation scheme for learning the explicit MPC control law using a \emph{primal policy} and a \emph{dual policy}.
The policies are trained and verified offline using randomly generated training and verification data, respectively. Online, in real-time, the dual policy is then used to estimate the performance of the control action given by the primal policy.
\blue{This is in contrast to most existing methods that incorporate safety constraints during the policy learning phase, which may result in  suboptimal controllers.} 
Specifically, our contributions can be summarized as follows:

\begin{itemize}
    \item \textcolor{black}{For the offline phase, we propose a supervised learning scheme to train the primal and dual policies, and introduce a randomized verification  methodology to estimate the quality (i.e., feasibility and suboptimality) of the trained policies.
    Given an admissible probability of quality violation, explicit sample sizes are provided for the verification step.}

    \item We show how, during the online phase, we can use the dual policy to track the quality (i.e., feasibility and suboptimality) of the approximated MPC law \blue{i.e. the primal policy}, using ideas from duality theory of convex optimization. 
    If the check fails, then a backup controller is used.
    


    \item We demonstrate the efficacy of the proposed primal-dual policy learning framework for an integrated chassis control problem in vehicle dynamics. In particular, we demonstrate computation speedups of up to 100x, when compared to state-of-the-art numerical solvers, potentially enabling the implementation of MPC on mass-produced embedded systems.
\end{itemize}
In contrast to most existing methods, our methodology also applies to linear parameter-varying systems. We also stress that the proposed framework is applicable to any function approximation scheme. In that regard, our methodology is able to check safety and performance of control laws that are encoded through deep neural networks (DNN). 


\section{Problem Description}\label{sec:problemDescription}
\subsection{Dynamics, constraints, and control objective}
We consider linear parameter-varying (LPV) systems of the form
\begin{equation}\label{eq:paramVaryingSystem}
    x_{k+1} = A(q_k)x_k + B(q_k)u_k,
\end{equation}
where $x_k\in\R^{n_x}$ is the state at time $k$, $u_k\in\R^{n_u}$ is the input at time $k$, and $A(q_k)$ and $B(q_k)$ are known matrices of appropriate dimensions, that depend on a time-varying parameter $q_k$. Throughout this paper, we assume that the parameter $q_k$ is known. The system is subject to linear input and state constraints of the form
\begin{equation}\label{eq:constraints}
    \mathbb U := \{u: H_u u_k \leq h_u\}, \qquad \mathbb X := \{x: H_x x_k \leq h_x\},
\end{equation}
for given $H_u$, $h_u$, $H_x$ and $h_x$.
At each time step $k$, the control objective is to minimize, over a finite horizon $T$, a quadratic cost of the form
\begin{equation}\label{eq:objFunction}
    x_T^\top Q_f x_T + \sum_{k=0}^{T-1} x_k^\top Q x_k + u_k^\top R u_k,
\end{equation}
where the matrices $Q$ and $Q_f$ are assumed to be positive semi-definite and $R$ is chosen to be positive definite.

\subsection{Model Predictive Control}
At each time step $t$, Model Predictive Control (MPC) measures the state $x_t$ and solves the following finite-horizon optimal control problem
\begin{equation} \label{eq:MPCproblemOrig}
    \begin{array}{llll}
        \displaystyle \min_{U_t, X_t} & x_{N|t}^\top Q_f x_{T|t} + \displaystyle \sum_{k=0}^{T-1} x_{k|t}^\top Q x_{k|t} + u_{k|t}^\top R u_{k|t}  \\
        \ \ \text{s.t.} & x_{k+1|t} = A(q_{k|t}) x_{k|t} + B(q_{k|t}) u_{k|t}, \\                    & (x_{k|t}, u_{k|t} ) \in \mathbb X \times \mathbb U,\  x_{T|t} \in \mathbb X_f, \\
                    & x_{0|t} = x_t,\ \ k=0,\ldots,T-1,
    \end{array}
\end{equation}
where $x_{k|t}$ is the state at time $t+k$ obtained by applying the predicted inputs $u_{0|t},\ldots,u_{k-1|t}$ to system~\eqref{eq:paramVaryingSystem}. Furthermore, $U_t := [u_{0|t} ,\ldots, u_{T-1|t}]$ and $X_t := [x_{0|t} ,\ldots, x_{T|t}]$ are the collection of all predicted inputs and states, respectively. The set $\mathbb X_f \subset\R^{n_x}$, which we assume is a compact polytope, is a so-called terminal set, and ensures recursive feasibility of the MPC controller, see e.g.\ \cite{BorrelliBemporadMorari_book} for details. If $U^*_t$ is a minimizer of \eqref{eq:MPCproblemOrig}, then MPC applies the first input $u_t = u^*_{0|t}$. This process is repeated at the next time step, resulting in a receding horizon control scheme. 

By eliminating the states $X_t$ from \eqref{eq:MPCproblemOrig}, we can express \eqref{eq:MPCproblemOrig} compactly as
\begin{equation}\label{eq:MPC_primal}
    \begin{array}{rlll}
        J^*(P_t) := \displaystyle \min_{U} & \frac{1}{2} U^\top Q(P_t) U + c(P_t)^\top U  \\
        \ \ \text{s.t.} & H(P_t) U \leq h(P_t),
    \end{array}
\end{equation}
where $P_t := [x_t,q_{0|t},\ldots,q_{T-1|t}]$ is the collection of all parameters $\{q_{k|t}\}_k$ and the initial state, and $Q(P_t)$, $c(P_t)$, $H(P_t)$, $h(P_t)$ are appropriately defined matrices, see e.g.,~\cite{Goulart2006} for their construction. 
\blue{It is assumed that, at each time step $t$, the parameters $P_t$ are known. In practice, they may come from an external estimator.}
We point out that \eqref{eq:MPC_primal} is a \emph{multi-parametric quadratic program}, whose optimizer $U^*(\cdot)$ and optimal value $J^*(\cdot)$ depend on $P_t$ \cite{BorrelliBemporadMorari_book}. To streamline the upcoming presentation, we assume in this paper that the parameters $P_t$ take values in a compact set $\mathcal{P}$, and that \eqref{eq:MPC_primal} is feasible and finite for all $P_t \in \mathcal{P}$.

Solving the optimization problem~\eqref{eq:MPC_primal} at each sampling time can be computationally challenging for fast systems on resource constrained platforms. To address this issue, we propose the use of function approximation to \blue{offline} \emph{learn} an approximate policy $\tilde U_\theta(\cdot) \approx U^*(\cdot)$ (``primal policy"), as well as an run-time optimality certificate via a so-called ``dual policy".

\section{Technical Background}\label{sec:background}

\subsection{Supervised Learning}\label{sec:supLearning}
The goal in classical function approximation is to approximate a function $f\in\mathcal{F}$, defined on some function space $\mathcal{F}$, by another function $\tilde f\in\tilde{\mathcal{F}}\subset \mathcal F$ such that $\|f - \tilde f\|_\mathcal{F}$ is minimized. Since this minimization problem is performed over the infinite dimensional space of functions $\tilde{\mathcal F}$, it is generally intractable. A common approach is to restrict oneself to function spaces $\tilde{\mathcal F}=\tilde{\mathcal F}_\theta$ that are defined by a finite number of parameters $\theta$, and to approximate the norm $\|\cdot\|_{\mathcal F}$ by the empirical error. This is achieved by drawing $M$ samples $\{z^{(i)},f(z^{(i)})\}_{i=1}^M$, upon which the finite-dimensional problem of \emph{learning $f(\cdot)$} is given by
\begin{equation}\label{eq:Function_Approx}
    \theta^* := \displaystyle \arg\min_\theta\ \ \sum_{i=1}^M \mathcal{L}\left( f(z^{(i)}) - \tilde{f}_\theta(z^{(i)}) \right), 
\end{equation}
where $\theta^*$ denotes the optimal parameter and $\mathcal{L}(\cdot)$ a loss function, such as the euclidean norm. The choice of the loss function and the function space $\tilde{\mathcal{F}}_\theta$ is often problem-dependent. \blue{Typical function classes} \blue{include Deep Neural Networks and weighted sums of basis functions, see \cite{Smola2004, Genton:2002} for examples. }



\subsection{Duality Theory}\label{sec:dualitySection}
Duality is used in optimization to certify optimality of a given (candidate) solution. Specifically, to every convex optimization problem $p^* := \min_x\{f(x): h(x)\leq 0\}$, we can associate its \emph{dual problem} $d^* := \max_{\lambda}\{g(\lambda): \lambda\geq0\}$, where $g(\lambda) := \min_x\{f(x) - \lambda^\top h(x)\}$. Under appropriate technical assumptions\footnote{These include feasibility, finite optimal value, and constraint qualifications, see \cite[Chapter~5]{boyd2004convex} for details.}, it can be shown that $p^* = d^*$ (``strong duality"). In this paper, we will make use of the \emph{weak duality} property that, for every primal feasible point $x$ and every dual feasible $\lambda$, it holds
\begin{equation}\label{eq:weakDuality}
    g(\lambda) \leq f(x).
\end{equation}
Notice that \eqref{eq:weakDuality} can be used to bound the suboptimality of a candidate solution $\bar x$, since $f(\bar x) - p^* \leq f(\bar x) - g(\lambda)$, for any $\lambda\geq0$. 

\subsubsection*{Dual of \eqref{eq:MPC_primal}}
It is well-known that the dual of a convex quadratic optimization problem is again a convex quadratic optimization problem \cite{boyd2004convex}. Specifically, the dual of \eqref{eq:MPC_primal}, is given by
\begin{equation}\label{eq:MPC_dual}
    \begin{array}{rlll}
        D^*(P_t) := \displaystyle \min_{\lambda_t} & \frac{1}{2} \lambda_t^\top \tilde Q(P_t) \lambda_t + \tilde c(P_t)^\top \lambda_t + \tilde g(P_t)  \\
        \text{s.t.}\ & \lambda_t \geq 0,
    \end{array}
\end{equation}
where $\tilde Q(\cdot)$, $\tilde c(\cdot)$ and $\tilde{g}(\cdot)$ depend on $P_t$. Notice that, similar to \eqref{eq:MPC_primal}, the optimizer of \eqref{eq:MPC_dual} depends on the parameters $P_t$, i.e, $\lambda_t^* = \lambda_t^*(P_t)$. Furthermore, since \eqref{eq:MPC_primal} is convex, it follows from strong duality that $J^*(P_t) = D^*(P_t)$.

\section{Primal-Dual Policy Learning}\label{sec:primalDualPolicyLearning}
In this section, we present our primal-dual policy learning framework, where we learn both a primal policy $\tilde{U}_{\theta_p}(\cdot) \approx U^*(\cdot)$ and a dual policy $\tilde\lambda_{\theta_d}(\cdot) \approx \lambda^*(\cdot)$. We show how these approximated functions can be used to efficiently ensure feasibility and near-optimality of the control law during run-time of the controller. 

\subsection{Primal and Dual Learning Problems}
We use supervised learning to approximate  the primal policy $U^*(\cdot)$ and dual policy $\lambda^*(\cdot)$. To this end, we generate $M$ samples $\{P^{(i)}  , U^*(P^{(i)}), \lambda^*(P^{(i)})\}_{i=1}^M$, where $P^{(i)}\in\mathcal{P}$ are extracted according to some \blue{user-chosen} distribution $\mathbb P$, and $U^*(P^{(i)})$ and $\lambda^*(P^{(i)})$  are obtained by solving \eqref{eq:MPC_primal} and \eqref{eq:MPC_dual}, respectively. \blue{The choice of the distribution $\mathbb P$ in general will depend on the task.  One could, for example, bias the distribution around a nominal operating point. If no such operating point is known, then the uniform distribution over $\mathcal{P}$ can be chosen.}

Given the samples, the \emph{primal learning problem} is
\begin{subequations}
\begin{equation}\label{eq:primalLearning}
        \theta_p^* := \arg\min_{\theta_p} \ \displaystyle\sum_{i=1}^M \mathcal{L}\left(  \tilde U_{\theta_p}(P^{(i)}) - U^*(P^{(i)})  \right),
\end{equation}
while the \emph{dual learning problem} is given by
\begin{equation}\label{eq:dualLearning}
       \theta_d^* := \displaystyle \arg\min_{\theta_d}  \displaystyle \sum_{i=1}^M \mathcal{L} \left( \tilde \lambda_{\theta_d}(P^{(i)}) - \lambda^*(P^{(i)})  \right).
\end{equation}
\end{subequations}
We refer to $\tilde U_{\theta_p^*}(\cdot)$ and  $\tilde \lambda_{\theta_d^*}(\cdot)$ as the \emph{approximated primal policy} and \emph{approximated dual policy}, respectively. \blue{Depending on the choice of the learning algorithm, problems \eqref{eq:primalLearning} and \eqref{eq:dualLearning} can be computationally demanding to solve. Hence, those optimization problems are generally carried out offline.}

In general \blue{once \eqref{eq:primalLearning} and \eqref{eq:dualLearning} have been solved offline}, it is difficult to validate  feasibility and optimality of the approximated policies $\tilde U_{\theta_p^*}(\cdot)$ and $\tilde \lambda_{\theta_p^*}(\cdot)$. In the following, we describe a sampling based probabilistic verification scheme (\blue{offline, before deployment}, Section~\ref{sec:aPrioriGuarantees}), and a ``hard" deterministic verification scheme (\blue{online, during run-time}, Section~\ref{sec:aPosterioriGuarantees}).

\subsection{Probabilistic Safety and Performance Guarantees}\label{sec:aPrioriGuarantees}
In this section, we provide a methodology to verify the feasibility and optimality of the approximated policies \blue{offline after \eqref{eq:primalLearning} and \eqref{eq:dualLearning} are solved}. Specifically, given a desired maximum suboptimality level, we would like to verify that the approximated \blue{primal} policy satisfies this suboptimality level with high probability.  Formally, we define the  primal and dual objective functions
\begin{subequations}
\begin{align}
     p(P;U) &:= \textstyle \frac{1}{2}U^\top Q(P)U + c(P)^\top U \label{eq:primalObj}\\
     d(P;\lambda) &:= \frac{1}{2} \lambda^\top \tilde Q(P)\lambda + \tilde{c}(P)^\top\lambda + \tilde g(P) \label{eq:dualObj}.
\end{align}
\end{subequations}
Our goal is to ensure that \blue{the approximated policies are feasible and near-optimal with high probability, i.e.,}
\begin{subequations}
\begin{align}
    \mathbb{P}\big[ \hspace{.2cm} & H(P) \tilde U_{\theta_p^*}(P)\leq h(P), \label{eq:primGoal} \\
    &   p(P; \tilde U_{\theta_p^*}(P)) \leq J^*(P)+\gamma_p  \hspace{.2cm}   \big] \geq 1-\epsilon_p, \nonumber \\
    \mathbb{P}\big[   \hspace{.2cm}  &  \tilde \lambda_{\theta_d^*}(P)\geq0,  \label{eq:dualGoal} \\
    &   d(P; \tilde \lambda_{\theta_d^*}(P)) \geq J^*(P)-\gamma_d  \hspace{.2cm}  \big] \geq 1-\epsilon_d, \nonumber
\end{align}
\end{subequations}
where $\gamma_p$ ($\gamma_d$) are user-defined desired primal (dual) suboptimality levels, $\epsilon_p$ ($\epsilon_d$) are the user-defined maximum admissible primal (dual) violation probabilities, and $J^*(P)=p(P;U^*(P))=d(P;\lambda^*(P))=D^*(P)$ denotes the optimal value.

In general, evaluating \eqref{eq:primGoal}--\eqref{eq:dualGoal} is difficult since a multidimensional probability integral needs to be evaluated. In the following, we propose a sampling based method for verifying the above conditions. To this end, we extract $N_p$ primal and $N_d$ dual (verification) samples $\{P^{(i)}, J^*(P^{(i)})\}_{i}$. 
\begin{proposition}\label{prop:aPriori}
    Let $0<\beta_p\ll1$ and $0<\beta_d\ll1$ be desired primal and dual confidence levels, and $N_p \geq \frac{\ln(1/\beta_p)}{\ln(1/(1-\epsilon_p))}, N_d \geq \frac{\ln(1/\beta_d)}{\ln(1/(1-\epsilon_d))}$.
    If the following ``primal conditions" hold
    \begin{equation}\label{eq:primAPrioriVer}
    \left.
    \begin{array}{ll}
        & H(P^{(i)}) \tilde U_{\theta_p^*}(P^{(i)})\leq h(P^{(i)}),\\ 
        & p(P^{(i)}; \tilde U_{\theta_p^*}(P^{(i)})) \leq J^*(P^{(i)})+\gamma_p 
    \end{array}
    \right\} i=1,\ldots,N_p
    \end{equation}
    then \eqref{eq:primGoal} is satisfied with confidence at least $1-\beta_p$. Similarly, if the following ``dual conditions" hold
   \begin{equation}\label{eq:dualAPrioriVer}
    \left.
    \begin{array}{ll}
        & \tilde \lambda_{\theta_d^*}(P^{(i)})\geq0,\\ 
        & d(P^{(i)};\tilde \lambda_{\theta_d^*}(P^{(i)})) \geq J^*(P^{(i)})-\gamma_d 
    \end{array}
    \right\} i=1,\ldots,N_d,
    \end{equation}
    then \eqref{eq:dualGoal} is satisfied with confidence at least $1-\beta_d$.
\end{proposition}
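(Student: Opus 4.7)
The plan is to prove this by the standard scenario / probabilistic verification argument: whenever the unknown "bad" set has probability mass exceeding $\epsilon_p$, it is very unlikely that an i.i.d.\ batch of $N_p$ samples misses it entirely. Since the two statements (primal and dual) are structurally identical, I would prove the primal part in full and then remark that the dual part follows by the same reasoning with $(\gamma_d,\epsilon_d,\beta_d,N_d)$ in place of $(\gamma_p,\epsilon_p,\beta_p,N_p)$.

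First I would introduce the primal \emph{violation set}
\[
    V_p := \bigl\{ P\in\mathcal{P} : H(P)\tilde U_{\theta_p^*}(P)\not\leq h(P)\ \text{or}\ p(P;\tilde U_{\theta_p^*}(P)) > J^*(P)+\gamma_p \bigr\},
\]
so that \eqref{eq:primGoal} is exactly the statement $\mathbb{P}[V_p]\leq \epsilon_p$. The verification conditions \eqref{eq:primAPrioriVer} are equivalent to $P^{(i)}\notin V_p$ for every $i=1,\ldots,N_p$. I would then view the outcome of the verification as a random event on the product space: the samples $P^{(1)},\ldots,P^{(N_p)}$ are i.i.d.\ under $\mathbb{P}$, and the claim to prove is a statement about this product measure.

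The key step is a contrapositive/union-style bound. Suppose $\mathbb{P}[V_p] > \epsilon_p$. Then by independence, the probability that none of the $N_p$ verification samples falls in $V_p$ is
\[
    (1-\mathbb{P}[V_p])^{N_p} < (1-\epsilon_p)^{N_p}.
\]
Taking logarithms, the choice $N_p \geq \ln(1/\beta_p)/\ln(1/(1-\epsilon_p))$ gives $(1-\epsilon_p)^{N_p}\leq \beta_p$. Hence the event ``\eqref{eq:primAPrioriVer} holds while $\mathbb{P}[V_p] > \epsilon_p$'' has probability at most $\beta_p$. Equivalently, conditional on observing \eqref{eq:primAPrioriVer}, we have $\mathbb{P}[V_p]\leq \epsilon_p$ with confidence at least $1-\beta_p$, which is exactly \eqref{eq:primGoal}. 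The dual case is identical with $V_d := \{P: \tilde\lambda_{\theta_d^*}(P)\not\geq 0\ \text{or}\ d(P;\tilde\lambda_{\theta_d^*}(P)) < J^*(P)-\gamma_d\}$.

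I don't expect any serious obstacle: the only subtlety is stating clearly that $\mathbb{P}$ and the confidence measure are distinct (one over $\mathcal{P}$, one over the sample sequence), and that the verification samples must be drawn i.i.d.\ from the same $\mathbb{P}$ used in \eqref{eq:primGoal}--\eqref{eq:dualGoal}. I would make that explicit at the start of the proof so the reader does not conflate the two probabilities. Everything else is the one-line inequality $(1-\epsilon_p)^{N_p}\leq \beta_p$ together with its logarithmic rearrangement.
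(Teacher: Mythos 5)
Your proof is correct and follows essentially the same route as the paper: the paper packages the i.i.d.\ miss-probability bound as a citation to Tempo et al.'s Theorem~3.1, applied to an auxiliary max-function $Q(P)$ whose nonpositivity encodes the verification conditions, whereas you prove the underlying inequality $(1-\epsilon_p)^{N_p}\leq\beta_p$ directly --- the mathematics is identical. Your explicit remark distinguishing the measure $\mathbb{P}$ over $\mathcal{P}$ from the product measure over the sample sequence is a point the paper leaves implicit.
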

\begin{proof}
    The proof is based on \cite[Theorem 3.1]{TempoAut1997LogOverLog}. Details are provided in the Appendix.
\end{proof}
Given $\epsilon_{p/d},\gamma_{p/d},\beta_{p/d}>0$, Proposition~\ref{prop:aPriori} lower bounds the sample sizes $N_p$ and $N_d$ that need to be used for verification of the trained policies. 
The so-called confidence levels $\beta_{p/d}$ are typically chosen very small ($10^{-6}\sim10^{-8}$), and only have a small impact on the sample sizes \blue{due to the $\log$-dependence}. Intuitively, they bound the probability of the verification schemes failing, see \cite{tempo:calafiore:dabbene} for details.
In practice, Proposition~\ref{prop:aPriori} can be used to decide if the policies require retraining. 

\blue{We point out that as both policy learning and sampling-based verification schemes are done offline prior to deployment of the approximated controllers, this allows us to train and verify, and potentially re-train, with large sample sizes to ensure accuracy of the learned policies $\tilde U_{\theta_p^*}(\cdot)$ and $\tilde \lambda_{\theta_p^*}(\cdot)$.}

\subsection{Online Feasibility and Optimality Certification}\label{sec:aPosterioriGuarantees}
Proposition~\ref{prop:aPriori} guarantees that the approximated primal and dual policies $\tilde{U}_{\theta_p}(\cdot)$ and $\tilde{\lambda}_{\theta_d}$ are feasible and near-optimal with probability at least $1-\epsilon_p$ and $1-\epsilon_d$. However, Proposition~\ref{prop:aPriori} is \emph{not} able to certify feasibility and optimality  of $\tilde U_{\theta_p^*}(\bar P)$ and $\tilde \lambda_{\theta_p^*}(\bar P)$ at a  \emph{given} parameter of $\bar{P}$. To address this issue, we next propose a ``hard" deterministic certification scheme, \blue{which is carried out online during run-time}. We begin with the following observation:
\begin{proposition}\label{prop:PDcheck}
    Given a parameter $P$, assume that $\tilde U_{\theta_p^*}(P)$ satisfies $H(P)\tilde U_{\theta_p^*}(P) \leq h(P)$ and $\tilde\lambda_{\theta_d^*}(P)\geq0$. Then, the suboptimality of $\tilde U_{\theta_p^*}(P)$ is bounded by
    \begin{equation}\label{eq:PDcheck}
            p(P;\tilde U_{\theta_p^*}(P)) - d(P;\tilde\lambda_{\theta_d^*}(P)),
    \end{equation}
    i.e., $p(P; \tilde U_{\theta_p^*}(P)) - J^*(P) \leq   p(P;\tilde U_{\theta_p^*}(x)) - d(P;\tilde\lambda_{\theta_d^*}(P))$.
\end{proposition}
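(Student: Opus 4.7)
The plan is to invoke the weak duality property \eqref{eq:weakDuality} recalled in Section~\ref{sec:dualitySection}, applied to the primal-dual QP pair \eqref{eq:MPC_primal}--\eqref{eq:MPC_dual} at the given parameter $P$. The claim is essentially a direct corollary and does not require any machinery beyond what has already been introduced; the work lies entirely in checking that the hypotheses match the preconditions of weak duality.

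First, I would observe that $\tilde U_{\theta_p^*}(P)$ is primal feasible for \eqref{eq:MPC_primal} and $\tilde\lambda_{\theta_d^*}(P)$ is dual feasible for \eqref{eq:MPC_dual}: the inequality $H(P)\tilde U_{\theta_p^*}(P)\leq h(P)$ is exactly the primal constraint in \eqref{eq:MPC_primal}, while $\tilde\lambda_{\theta_d^*}(P)\geq 0$ is exactly the nonnegativity constraint in \eqref{eq:MPC_dual}. Because \eqref{eq:MPC_primal} is a convex QP and, as recalled in Section~\ref{sec:dualitySection}, we have $J^*(P)=D^*(P)$, the optimal value $J^*(P)$ serves simultaneously as a lower bound on any primal-feasible objective value and as an upper bound on any dual-feasible objective value.

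Next, I would apply weak duality twice at the common parameter $P$. Primal feasibility of $\tilde U_{\theta_p^*}(P)$ implies $p(P;\tilde U_{\theta_p^*}(P))\geq J^*(P)$, while the weak duality bound \eqref{eq:weakDuality} specialized to \eqref{eq:MPC_dual} together with dual feasibility of $\tilde\lambda_{\theta_d^*}(P)$ yields $d(P;\tilde\lambda_{\theta_d^*}(P))\leq J^*(P)$, i.e. $-J^*(P)\leq -d(P;\tilde\lambda_{\theta_d^*}(P))$. Adding $p(P;\tilde U_{\theta_p^*}(P))$ to both sides of the latter inequality produces
\[
p(P;\tilde U_{\theta_p^*}(P))-J^*(P)\;\leq\; p(P;\tilde U_{\theta_p^*}(P))-d(P;\tilde\lambda_{\theta_d^*}(P)),
\]
which is the desired bound \eqref{eq:PDcheck}.

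There is no real obstacle: the result is a standard weak-duality gap argument, and the value of the proposition lies in its interpretation rather than in any technical subtlety. The conceptual point I would emphasize in the writeup is that \emph{only feasibility} of the two approximated policies is needed, not optimality, which is precisely what makes \eqref{eq:PDcheck} a usable online certificate computable directly from the learned networks. I would also remark that the bound is tight when the approximated primal and dual iterates coincide with the true optimizers, in which case strong duality closes the gap to zero.
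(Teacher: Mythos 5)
Your proposal is correct and follows essentially the same route as the paper: both reduce the claim to weak duality, using dual feasibility of $\tilde\lambda_{\theta_d^*}(P)$ to conclude $d(P;\tilde\lambda_{\theta_d^*}(P)) \leq J^*(P)$ and then rearranging. The extra observations you include (primal feasibility giving $p(P;\tilde U_{\theta_p^*}(P))\geq J^*(P)$, tightness at the true optimizers) are accurate but not needed for the stated inequality.
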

\begin{proof}
    By assumption, $\tilde U_{\theta_p^*}(P)$ and $\tilde\lambda_{\theta_d^*}(P)$ are primal and dual feasible, respectively. By weak duality (Section~\ref{sec:dualitySection}), $d(P;\tilde\lambda_{\theta_d^*}(P)) \leq J^*(P)$, which concludes the proof.
\end{proof}
Proposition~\ref{prop:PDcheck} provides a computationally efficient way to estimate the suboptimality of the approximated primal policy $\tilde U_{\theta_p^*}(\cdot)$ without solving the optimization problem. 

We use Proposition~\ref{prop:PDcheck} in our framework as follows: Let $\gamma>0$ be the desired maximum suboptimality level. For a given parameter $P$, if \eqref{eq:PDcheck} is smaller than $\gamma$, then the primal solution $\tilde{U}_{\theta_p^*}(P)$ is guaranteed to be at most $\gamma$-suboptimal. In this case, the first element of  $\tilde U_{\theta_p^*}(P)$ is applied, and the procedure is repeated at the next time step. If \eqref{eq:PDcheck} is larger than the predetermined suboptimality level $\gamma$, then a backup controller is employed (see Section~\ref{sec:discussion} for a discussion).

Algorithm~\ref{alg:PDpolicyLearning} summarizes the proposed \emph{Primal-Dual Policy Learning} scheme. 
\begin{algorithm}[h!]
\caption{Primal-Dual Policy Learning}\label{alg:PDpolicyLearning}
\begin{algorithmic}[1]
\vspace{0.2em}

\Statex \hspace{-1.05em}
\textbf{Input:} max.~{violation} probability $\epsilon>0$, confidence level $0<\beta\ll 1$, suboptimality level $\gamma>0$

\Statex \hspace{-1.05em}
\textbf{Select:} $\epsilon_p,\epsilon_d>0$, $\epsilon_p+\epsilon_d=\epsilon$; $\beta_p,\beta_d>0$, $\beta_p+\beta_d=\beta$; $\gamma_p,\gamma_d>0$, $\gamma_p+\gamma_d=\gamma$
\Statex{} 

\Statex\hspace{-1.5em} \textit{Start learning process (offline)}
\Statex\hspace{-1em}\textbf{begin training}

\vspace{0.1em}
\hspace{-0.7cm}
\begin{minipage}[t]{\linewidth}
\begin{algorithmic}[1]
\State Learn primal policy $\tilde U_{\theta_p^*}(\cdot) \approx U^*(\cdot)$ as in \eqref{eq:primalLearning}
\State Learn dual policy $\tilde{\lambda}_{\theta_d^*} \approx {\lambda}^*(\cdot)$ as in \eqref{eq:dualLearning}
\State Validate $\tilde U_{\theta_p^*}(\cdot)$ and $\tilde \lambda_{\theta_d^*}(\cdot)$ using Proposition~\ref{prop:aPriori}
\State Repeat until \eqref{eq:primAPrioriVer} and \eqref{eq:dualAPrioriVer} satisfied
\end{algorithmic}
\end{minipage}
\vspace{0.01em}
\Statex \hspace{-1em}\textbf{end training} 

\Statex{} 

\Statex\hspace{-1.5em} \textit{Start control process (online)}
\Statex\hspace{-1em}\textbf{begin control loop} (for $t=0,1,\ldots$)

\vspace{0.1em}
\hspace{-0.7cm}
\begin{minipage}[t]{\linewidth}
\begin{algorithmic}[1]
\State Obtain $P_t$; evaluate $\tilde U_{\theta_p^*}(P_t)$ and $\tilde\lambda_{\theta_d^*}(P_t)$
\State If  $\tilde U_{\theta_p^*}(P_t)$ and $\tilde\lambda_{\theta_d^*}(P_t)$ feasible and \eqref{eq:PDcheck} $\leq\gamma$,  apply first element of $\tilde U_{\theta_p^*}(P_t)$
\State Else, apply back-up controller
\end{algorithmic}
\end{minipage}
\vspace{0.01em}
\Statex \hspace{-1em}\textbf{end control loop} {(until end of process)}
\end{algorithmic}
\end{algorithm}
The following theorem bounds the frequency in which the backup controller will be applied.
\begin{theorem}\label{thm:verification}
    Assume that $\tilde U_{\theta_p^*}(P_t)$ and $\tilde\lambda_{\theta_d^*}(P_t)$ satisfy the conditions ~\eqref{eq:primAPrioriVer}--\eqref{eq:dualAPrioriVer}, and let  $\beta$, $\beta_p$, $\beta_d$, $\epsilon$, $\epsilon_p$, $\epsilon_d$, $N_p$ and $N_d$ be chosen as in Algorithm~\ref{alg:PDpolicyLearning}. Then, with confidence at least $1-\beta$, it holds
    \begin{align*}
         \mathbb{P}\big[ \hspace{.2cm} & H(P) \tilde U_{\theta_p^*}(P)\leq h(P), \tilde \lambda_{\theta_d^*}(P)\geq0, \\
           & p(P;\tilde U_{\theta_p^*}(P)) - d(P;\tilde\lambda_{\theta_d^*}(P))  \leq \gamma \hspace{.2cm} \big ] \geq 1-\epsilon. 
    \end{align*}
\end{theorem}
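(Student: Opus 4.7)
The plan is to reduce Theorem~\ref{thm:verification} to two separate applications of Proposition~\ref{prop:aPriori}, one for the primal and one for the dual policy, and then to glue the two resulting events together via two union bounds: one taken at the level of the probability over $P$ (accounting for $\epsilon_p+\epsilon_d=\epsilon$) and one taken at the level of the verifier's confidence (accounting for $\beta_p+\beta_d=\beta$).

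First I would introduce the two random events in the parameter $P$,
\begin{align*}
\mathcal{E}_p &:= \{\, H(P)\tilde U_{\theta_p^*}(P) \leq h(P) \ \text{and}\ p(P;\tilde U_{\theta_p^*}(P)) \leq J^*(P)+\gamma_p\,\},\\
\mathcal{E}_d &:= \{\, \tilde\lambda_{\theta_d^*}(P) \geq 0 \ \text{and}\ d(P;\tilde\lambda_{\theta_d^*}(P)) \geq J^*(P)-\gamma_d\,\}.
\end{align*}
Since the hypotheses of Theorem~\ref{thm:verification} assert that \eqref{eq:primAPrioriVer} and \eqref{eq:dualAPrioriVer} hold on the $N_p$ primal and $N_d$ dual verification samples, Proposition~\ref{prop:aPriori} yields $\mathbb{P}[\mathcal{E}_p]\geq 1-\epsilon_p$ with confidence at least $1-\beta_p$, and $\mathbb{P}[\mathcal{E}_d]\geq 1-\epsilon_d$ with confidence at least $1-\beta_d$.

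Next, on the intersection $\mathcal{E}_p\cap \mathcal{E}_d$ both $\tilde U_{\theta_p^*}(P)$ is primal feasible and $\tilde \lambda_{\theta_d^*}(P)$ is dual feasible, and moreover
\[
p(P;\tilde U_{\theta_p^*}(P)) - d(P;\tilde\lambda_{\theta_d^*}(P)) \leq \bigl(J^*(P)+\gamma_p\bigr) - \bigl(J^*(P)-\gamma_d\bigr) = \gamma_p+\gamma_d = \gamma,
\]
so the event whose probability we want to lower bound in the theorem statement contains $\mathcal{E}_p\cap\mathcal{E}_d$. A union bound in $P$ gives $\mathbb{P}[\mathcal{E}_p\cap \mathcal{E}_d] \geq 1 - \epsilon_p - \epsilon_d = 1-\epsilon$, which is the desired probability statement.

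Finally I would handle the confidence. The two applications of Proposition~\ref{prop:aPriori} rely on disjoint sets of verification samples (the $N_p$ primal samples and the $N_d$ dual samples), so the event ``the primal bound holds'' and the event ``the dual bound holds'' (now viewed as events in the sampling randomness used for verification) have failure probabilities at most $\beta_p$ and $\beta_d$ respectively. A second union bound, this time over the verifier's randomness, shows that both bounds hold simultaneously with confidence at least $1-\beta_p-\beta_d = 1-\beta$, completing the proof. The only subtlety, and the one I would be most careful about, is keeping these two layers of probability conceptually separate: $\epsilon$ lives in the data-generating distribution over $P$ used at run-time, whereas $\beta$ lives in the verification-sample distribution used offline; conflating them is the natural pitfall, but once they are tracked separately the two union bounds are entirely routine.
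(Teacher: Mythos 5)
Your proposal is correct and follows essentially the same route as the paper, whose proof is simply the one-line remark that the result follows from the union bound together with Propositions~\ref{prop:aPriori} and~\ref{prop:PDcheck}; you have merely written out the two applications of Proposition~\ref{prop:aPriori} and the two union bounds (over $P$ for $\epsilon_p+\epsilon_d=\epsilon$ and over the verification randomness for $\beta_p+\beta_d=\beta$) explicitly. Your care in keeping the run-time probability over $P$ separate from the offline verification confidence is exactly the right bookkeeping, and your direct derivation of $p-d\leq\gamma_p+\gamma_d$ from the two events is the intended use of the suboptimality conditions.
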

\begin{proof}
   Application of the union bound \cite{bonferroni1936teoria} and Propositions~\ref{prop:aPriori}--\ref{prop:PDcheck} yields the desired result.
\end{proof}
{Theorem~\ref{thm:verification} ensures that, if the primal and dual policies have been validated according to Proposition~\ref{prop:aPriori}, then the primal policy $\tilde{U}_{\theta_p^*}(\cdot)$ will generate inputs that are at most $\gamma$-suboptimal with probability at least $(1-\epsilon)$, which can be checked during run-time using the ``\eqref{eq:PDcheck}$\leq \gamma$" condition.}

\subsection{Discussion}\label{sec:discussion}

\subsubsection*{Computational Complexity}
Since the primal-dual policy learning scheme only requires \blue{online} the \emph{evaluation} of the trained policies $\tilde{U}_{\theta_p^*}(\cdot)$ and $\tilde\lambda_{\theta_d^*}(\cdot)$, it can typically be executed significantly faster than solving the optimization problem \eqref{eq:MPCproblemOrig}. For example, if the policies are approximated through a combination of basis functions, i.e., $\tilde{U}_\theta(P) = \sum_i\theta_i \kappa_i(P)$, then the trained policies can be evaluated fully in parallel on embedded platforms such as FPGAs \cite{DomahidiLearning2011}. 


\subsubsection*{Backup Controller}
The choice of the backup controller, which is used in Algorithm~1 as a fall-back strategy,  is highly problem-dependent. Since backup controllers are generally available in practice, we do not see the use of a backup controller as a major limitation of our strategy. 




\section{Case Study: Integrated Chassis Control}\label{sec:simResults}
In this section we present a numerical case study in the field of integrated chassis control (ICC) for vehicles. Roughly speaking, the goal in chassis control is to improve \blue{a vehicle's dynamics and user comfort}  by actively controlling its motions. Traditionally, chassis control is carried out by many independent subsystems. More recently, however, there have been attempts to consider the coupling between each individual subsystem, giving rise to so-called integrated chassis control (ICC) \cite{ChandrasekanICC2011}. 
As such, MPC is a natural control strategy, since it allows the incorporation of input and state constraints in a disciplined manner, and is able to handle the multivariate nature of the task.

In the following, we study the problem of calculating the yaw moment, the roll moment and the lateral force \blue{in an integrated chassis control problem}. 

\subsection{Problem Formulation}
We consider a linear parameter-varying system of the form
\begin{align}\label{eq:icc_model_simple}
 x_{t+1} = A(v_t)x_t + B(v_t)u_t + E(v_t) \delta_t,\quad y_t = C x_t,  
\end{align}
where $x \in \R^4$ is the state, $u \in\R^3$ is the input, $y_t \in \mathbb{R}^3$ is the output, $v_t \in \mathbb{R}$ is the vehicle's longitudinal velocity, and $\delta_t \in \mathbb{R}$ is the driver's steering input, \blue{which we assume can be predicted}, see \cite{TAKANO2003149} for details on variable nomenclature.




The control objective is to minimize the output tracking error while satisfying input constraints and input rate constraints. Hence, the MPC problem is given by
\begin{equation}\label{eq:ICC_example}
    \begin{array}{llll}
        \displaystyle \min_{X_t,U_t} & \displaystyle \sum_{k=0}^{T-1} (y_{k|t}-y_{k|t}^\text{ref})^\top Q (y_{k|t}-y_{k|t}^\text{ref}) + u_{k|t}^\top R u_{k|t}  \\
        \ \ \text{s.t.}   & x_{k+1|t} = A(v_{t}) x_{k|t} + B(v_{t}) u_{k|t} + E(v_{t})\delta_{k|t}, \\
                          & y_{k|t} = C x_{k|t}, \ |u_{k|t}|\leq \bar{u},\ |u_{k|t} - u_{k-1|t}| \leq \xoverline{\Delta u},\\
                          & x_{0|t} = x_t,\ u_{-1|t} = u_{t-1},\ k=0,\ldots,T-1,
    \end{array}
\end{equation}
where $Q,R \in \mathbb{R}^{3 \times 3}$ are positive definite matrices, $y_{k|t}^\text{ref}\in\R^3$ are given reference signals, $\bar{u}$ defines the input constraints, $\xoverline{\Delta u}$ defines the rate constraints, and $u_{t-1}$ is the previous input. The parameters in \eqref{eq:ICC_example} are $P_t = (x_t, v_t, \{y_{k|t}^\text{ref},\delta_{k|t}\}_{k}, u_{-1|t})\in\R^{20}$. 
The velocity $v_t$ enters the dynamics in a nonlinear fashion, and so, the optimal control law $U^*(P_t)$ cannot be derived using standard methods from explicit MPC. In the following, we approximate $U^*(\cdot)$ using the approach described in Section~\ref{sec:primalDualPolicyLearning}. Throughout, we consider a horizon of $T=3$, which is typical in practice, such that $U^*: \R^{20} \to \R^9$.

\subsection{Offline Training and Verification} 
In this section, we illustrate our proposed primal-dual policy learning method on \eqref{eq:ICC_example} using a Deep Neural Network function approximator with Rectified Linear Unit (DNN-ReLU) activation \cite{Goodfellow-et-al-2016}. 
We train our primal and dual neural networks using $M = 1\,000$ samples\footnote{\blue{The training sample size is a tuning parameter whose choice depends on the specific problem instance.}}, uniformly sampled over a parameter set $\mathcal{P}$. We aim for a maximal admissible suboptimality level of $\gamma=1$, which should hold with probability at least $99\%$. Following Algorithm~\ref{alg:PDpolicyLearning}, we select $\gamma_p = \gamma_d = \gamma/2=0.5$ and $\epsilon_p = \epsilon_d = \epsilon/2=0.5\%$. The confidence levels are chosen to be $\beta_p=\beta_d=10^{-7}$, resulting in a verification sample size of $N_p=N_d=3216$ (Proposition~\ref{prop:aPriori}). Using trial and error, we found a DNN-ReLU of width 15 and depth $L=3$ for $\tilde U_{\theta_p^*}(\cdot)$ which satisfies the conditions in Proposition~\ref{prop:aPriori}, while a DNN-ReLU of width 5 and depth $L=3$ for $\tilde{\lambda}_{\theta_d^*}(\cdot)$ is found to satisfy the condition in Proposition~\ref{prop:aPriori}.


Once the policies are trained, to estimate their conservatism, we evaluate the suboptimality levels $\alpha_p := p(P; \tilde U_{\theta_p^*}(P)) - J^*(P)$, $\alpha_d := J^*(P) - d(P;\tilde \lambda_{\theta_d^*}(P))$ and $\alpha := p(P;\tilde U_{\theta_p^*}(P)) - d(P;\tilde\lambda_{\theta_d^*}(P))$ for 100'000 randomly extracted parameters $P^{(i)}$, and also determine the empirical violation probabilities $\hat \epsilon_p$, $\hat \epsilon_d$ and $\hat\epsilon$. 


\begin{table}[h!]
\caption{Empirical suboptimality levels and violation probabilities. }
\label{tab:subOpt}
\centering 
\ra{1.3}
\begin{tabular}{@{}l | l l l || c c c @{}}\toprule
 & $\alpha_p$ & $\alpha_d$ & $\alpha$ & $\hat\epsilon_p$ & $\hat\epsilon_d$ & $\hat\epsilon$ \\
 \midrule
 mean       & 0.0061     & 0.0129       & 0.0190        &   &    &  \\ 
 median     & 0.00083    & 0.0053       & 0.0084        & 0.045\%   & 0   & 0.005\% \\ 
 maximum    &  1.2552    & 0.2608       & 1.2732        &           &       &  \\
\bottomrule
\end{tabular}
\end{table} 

We see from Table~\ref{tab:subOpt} that the approximated primal and dual policies result in control laws that are close to optimal, with an median duality gap of $\alpha = 0.00083$, and worst-case duality gap of $\alpha = 1.2732$. Furthermore, we see that the empirical probability of the duality gap being larger than the predetermined $\gamma=1$ is $\hat{\epsilon}= 0.005\%$, which is much smaller than the targeted violation probability of $\epsilon=1\%$. This implies that, for this numerical example, the sample sizes provided by Proposition~\ref{prop:aPriori} are conservative; in practice, the trained policies perform better than expected. 


\subsection{Comparison with Online MPC}
In this section, we compare the trained DNN-ReLUs  with online MPC in terms of computation time. 
To solve online MPC, we use Gurobi and Mosek, two state-of-the-art solvers.
Table~\ref{tab:compTime} reports the computation time of our primal-dual policy approximation scheme using DNN-ReLU and the online MPC.
The timings are taken on an early 2016 MacBook, that runs on a 1.3 GHz Intel Core m7 processor and is equipped with 8 GB RAM and 512 GB SSD. We use MATLAB generated MEX files to determine the run-time of DNN-ReLU. 
The optimization problems are formulated in Yalmip \cite{lofberg:05}, and the timings are those reported by the solvers themselves. \blue{Notice that, comparing the results with explicit MPC is difficult, since explicit MPC generally only applies for linear time-invariant systems.} 

\begin{table}[h!]
\caption{Computation Times, rounded to two significant digits.}
\label{tab:compTime}
\centering 
\ra{1.3}
\begin{tabular}{@{}l | l  l l l @{}}\toprule
time [ms] & DNN-ReLU  & Gurobi & Mosek \\
 \midrule
 min.   &  0.020   & 1.3  & 2.1     \\ 
 max.   & 0.034    & 2.5  & 3.1     \\
 mean   &  0.023   & 1.5  & 2.4     \\
 std.   & 0.0023   & 0.25 & 0.23    \\
\bottomrule
\end{tabular}
\end{table} 



From Table~\ref{tab:compTime}, we see that DNN-ReLU is significantly faster than online MPC, \blue{with an average speed-up of over 65x compared to Gurobi, and over 100x compared to Mosek}. This is because evaluating a DNN-ReLU just involves simple matrix-vector multiplications and $\max$-operations, but no matrix inversions, as opposed to the case of numerical optimization. 
Moreover, we observe that the evaluation times obtained using DNN-ReLU are more consistent than those of Gurobi and Mosek, with a standard deviation of~10\% only. 

\section{Conclusion}\label{sec:conclusion}
In this paper, we proposed a new method for approximating the explicit MPC control law for linear parameter varying systems.
We propose to approximate the MPC controller directly using supervised learning techniques, and invoke two verification schemes to ensure safety and performance of the approximated controller. 
Since the proposed verification scheme only requires the evaluation of trained policies, our algorithm is computationally efficient, and can be implemented even on resource-constrained systems. Indeed, our numerical case study has revealed that the proposed primal-dual policy learning framework allows the Integrated Chassis Control problem to be solved up to 100x faster compared to state-of-the-art solvers, while maintaining performance.


\balance
\section*{ACKNOWLEDGMENT}
This research was partially funded by the Hyundai Center of Excellence
at the University of California, Berkeley. This work was also sponsored by the Office of Naval Research.
The authors thank Yi-Wen Liao and Dr. Jongsang Suh for helpful discussions on the integrated chassis control problem.

\begin{appendix}

\section*{Proof of Proposition~\ref{prop:aPriori}}
Let $H_j(P)$ and $h_j(P)$ denote the $j$th row of $H(P)$ and $h(P)$, respectively. For a given $\tilde U(\cdot)$, consider the following auxiliary function
\begin{align*}
    Q(P) := \max\Big\{ & \max_j\{ H_j(P)\tilde U(P) - h_j(P)   \}, \\[-1ex]
                        & p(P;\tilde U(P)) - J^*(P)-\gamma_p  \Big\},
\end{align*}
and define $\hat Q_N := \max_{i=1,\ldots,N}\{ Q(P^{(i)})\}$, where $\{P^{(i)}\}_{i}$ are a collection of independent samples drawn according to $\mathbb{P}$. It follows \cite[Theorem 3.1]{TempoAut1997LogOverLog} that, if $N \geq \frac{\ln1/\beta}{\ln1/(1-\epsilon)}$, then 
\begin{equation*}
    \mathbb{P}^N \big[ \mathbb{P}[Q(P) > \hat{Q}_N] \leq \epsilon \big] \geq 1-\beta
\end{equation*}
Proposition~\ref{prop:aPriori} now follows from the observation that $\hat Q_N=0$ is equivalent to \eqref{eq:primGoal}.

\end{appendix}


\renewcommand{\baselinestretch}{0.91}
\renewcommand{\baselinestretch}{1}

\section*{References}
{
\printbibliography[heading=none, resetnumbers=true]
}



\end{document}